\newcommand{\R}{\mathbb{R}}
\newcommand{\eps}{\varepsilon}
\newcommand{\one}{\mathds{1}}
\theoremstyle{plain}
\newtheorem{theorem}{Theorem}[section]
\newtheorem{proposition}[theorem]{Proposition}
\theoremstyle{definition}
\newtheorem{definition}[theorem]{Definition}
\newtheorem{assumption}[theorem]{Assumption}
\theoremstyle{remark}
\newtheorem{remark}[theorem]{Remark}
\icmltitlerunning{Privacy Attacks in Decentralized Learning}
\begin{document}

\twocolumn[
\icmltitle{Privacy Attacks in Decentralized Learning}

\begin{icmlauthorlist}
\icmlauthor{Abdellah El Mrini}{yyy}
\icmlauthor{Edwige Cyffers}{xxx}
\icmlauthor{Aurélien Bellet}{zzz}
\end{icmlauthorlist}

\icmlaffiliation{yyy}{School of Computer and Communication Sciences, EPFL, Switzerland}
\icmlaffiliation{xxx}{Université de Lille, Inria, CNRS,
Centrale Lille, UMR 9189 - CRIStAL,
F-59000 Lille, France}
\icmlaffiliation{zzz}{Inria, Univ Montpellier, Montpellier, France}

\icmlcorrespondingauthor{Edwige Cyffers}{edwige.cyffers@inria.fr}

\icmlkeywords{Machine Learning, ICML}

\vskip 0.3in
]

\printAffiliationsAndNotice{}  %

\begin{abstract}
Decentralized Gradient Descent (D-GD) allows a set of users to perform collaborative learning without sharing their data by iteratively averaging local model updates with their neighbors in a network graph.
The absence of direct communication between non-neighbor nodes might lead to the belief that users cannot infer precise information about the data of others. In this work, we demonstrate the opposite, by proposing the first attack against D-GD that enables a user (or set of users) to reconstruct the private data of other users outside their immediate neighborhood.
Our approach is based on a reconstruction attack against the gossip averaging protocol, which we then extend to handle the additional challenges raised by D-GD. We validate the effectiveness of our attack on real graphs and datasets, showing that the number of users compromised by a single or a handful of attackers is often surprisingly large. We empirically investigate some of the factors that affect the performance of the attack, namely the graph topology, the number of attackers, and their position in the graph. Our code is available at \href{https://github.com/AbdellahElmrini/decAttack}{https://github.com/AbdellahElmrini/decAttack}.
\end{abstract}

\section{Introduction}

\looseness=-1 Federated learning \cite{kairouz2021advances} enables collaborative model
training among multiple data owners while keeping data
decentralized. Traditional federated algorithms employ a central server to
coordinate the training process and aggregate model updates \citep{FedAvg}.
However, this centralized approach poses challenges regarding its robustness
and scalability, as the server constitutes a single point
of failure and becomes a communication bottleneck as
the number of participants increases \citep{dec2017}. It also raises
privacy concerns: unless additional safeguards are employed, the central
server is able to observe the model updates of each participant, which has
been shown to sometimes leak as much information as the
data itself \citep{DeepLeakage,fedLearningAttacks,kariyappa2022cocktail}.

One way to mitigate these drawbacks is to consider fully
decentralized algorithms that eliminate the central server, relying instead on
peer-to-peer communications among participants, conceptualized as nodes within
a network graph \citep{boyd2006gossip}. Such algorithms, among which the
popular Decentralized Gradient Descent (D-GD), have received
increasing attention from the community 
\citep{scaman2017optimal,dec2017,hegeds_hal,
gossip_berthier,pmlr-v119-koloskova20a}. Beyond their enhanced scalability and
robustness, they prevent any single node from observing the individual
contributions of all other nodes. Indeed, while a node can observe the contributions
of its immediate neighbors in the graph, making the reconstruction of their
values somewhat expected \citep{pasquini2022privacy}, the contributions of
more distant nodes are iteratively mixed multiple times with other
contributions before being observed. Intuitively, one might thus expect that
individual contributions are harder to retrieve from only indirect feedback,
especially when the attacker node is far from its target. This, in
turn, leads to the common belief that decentralized learning may be more
privacy-preserving by design. Questioning this belief is the motivation of our work.

To do so, we design and evaluate attacks performed by a node (or a set of nodes) to retrieve other nodes' private data in two algorithms: gossip averaging, a
key protocol in decentralized computation \citep{boyd2006gossip,Xiao04,doi:10.1137/060678324}, and the prominent Decentralized
Gradient Descent (D-GD) algorithm \citep{dec2017,pmlr-v119-koloskova20a}.
As our goal is to assess the potential vulnerabilities
specific to decentralization, we focus on the strongest type of privacy
leakage, namely data reconstruction, executed by the weakest type of
attackers, namely honest-but-curious nodes. In other words, attacker nodes
follow the protocol and try to reconstruct as much data as possible from other
nodes using only their legitimate observations within the protocol. To the
best of our knowledge, the attacks we present are the first to
be able to reconstruct data from non-neighboring nodes.

Our attacks rely on interpreting each message received by the attackers as an
equation that ties together the private values of nodes, the weights of the
gossip matrix, and the received values. In the case of gossip averaging, we
show that an
appropriate factorization of a knowledge matrix representing these
equations yield the private values of the (often numerous)
reconstructible nodes, and additionally produces a reduced set of
equations linking together the values of the remaining nodes (which may leak
sensitive information in certain cases). Interestingly, we can predict
even before running the algorithm which nodes will have their values leaked
depending on the gossip matrix of the network graph.
For the case of D-GD, the key ideas of our attack against gossip averaging can
be applied
to reconstruct individual gradients, with some modifications to account for
the extra difficulty induced by combining gossip
averaging with gradient descent steps. Once we have reconstructed individual
gradients, we rely on existing gradient inversion attacks \citep{DeepLeakage,fedLearningAttacks} as a black-box
to
reconstruct data points. We show that under reasonable
assumptions, the reconstruction of the private data points of non-neighboring
nodes is still possible in D-GD.

We empirically evaluate the performance of our attacks on synthetic and
real graphs and datasets. Our attacks prove to be effective in practice
across various graph structures. In many cases, even a single attacker
node is able to reconstruct the data of a large number of other nodes, some of
them located many hops away. The collusion of several attacking
nodes further strengthens the attack. We also observe that the graph topology, the position of attackers in the graph, and the choice of learning rate play an important role.

Our results clearly show that relying solely on decentralization to ensure the
privacy of training data is ineffective, even if nodes
fully trust their immediate neighbors. Thus, additional protections must
be implemented.

\section{Related work}

\paragraph{Privacy in decentralized learning.} The potential for data compromise in decentralized learning is implicitly recognized by the research community, as evidenced by numerous works proposing defenses to limit privacy leakage. Among these are methods derived from differential privacy, where nodes introduce noise into their updates. Earlier work provides local differential privacy guarantees that come exclusively from the local noise addition~\citep{Huang2015a,admm_local,Bellet2018a,9524471}, while more recent results show that decentralization amplifies these baseline privacy guarantees under an adapted threat model~\citep{cyffers2022muffliato, cyffers2022privacy,Yakimenka2022StragglerResilientDD,admm}.
Other methods include %
the addition of correlated noise to constrain the information that can be extracted from local updates \cite{consensus2013, Mo2014, hanzely2017privacy, dellenbach2018hiding,10115431}. Our work shows that this area of research is indeed addressing a tangible threat. Regarding attacks on decentralized learning,  we are only aware of two recent works by \citet{pasquini2022privacy} and \citet{Dekker2023TopologyBasedRP}. The attack of \citet{pasquini2022privacy} only targets direct neighbors, making it quite similar to existing attacks on federated learning (see next paragraph). Concurrently and independently from our research, \citet{Dekker2023TopologyBasedRP} studied a different setting where nodes perform a sequence of secure aggregations with their neighbors, without considering a learning objective. Similar to \citep{pasquini2022privacy}, their attack focuses solely on direct neighbors.
In contrast, our attack is capable of reconstructing the data of more distant nodes, thus addressing the specific challenges of the decentralized setting more effectively.

\paragraph{Reconstruction attacks in federated learning.} Privacy attacks against (server-based) federated algorithms are an active topic of research. From the possibility of reconstructing data from the gradient via gradient descent \cite{DeepLeakage}, a.k.a. gradient inversion, various improvements have been developed \cite{fedLearningAttacks,zhao2020idlg, optAttack}, including techniques capable of separating gradients aggregated over large batches \cite{kariyappa2022cocktail}. In our attack on D-GD, our focus and core contribution is the reconstruction of gradients from the observations of attacker nodes. In a second step, we use a gradient inversion attack as a black box to reconstruct data from the reconstructed gradients.
Thus, our work is complementary to gradient inversion attacks and could benefit from any advancements in this area.

\section{Setting}

\subsection{Decentralized Algorithms}

In this section, we first present the setting of decentralized learning and the algorithms we study. We consider a fixed undirected and connected graph $\mathcal{G} = (\mathcal{V},\mathcal{E})$ where $|\mathcal{V}| = n$ and an edge $\{u,v\} \in \mathcal{E}$ indicates that $u$ and $v$ can exchange messages. We denote by $\mathcal{N}(u) = \{v : \{u,v\}\in \mathcal{E}\}$ the neighbors of node $v$. We assume that each node $u$ possesses a private value $x_u \in \R^d$. 
To ease the notations, we will assume in the presentation of the next sections that $d=1$, but the generalization to the vector case is straightforward. 

\begin{remark}
The simplification of considering local datasets with a single element was also made by \citet{pasquini2022privacy} and is natural in the case of decentralized averaging, where each node has indeed only one private value. For Decentralized Gradient Descent, recent work has proposed reconstruction attacks from gradients aggregated over several data points \cite{Boenisch2021WhenTC,kariyappa2022cocktail}, which can be readily used as a black-box in our attack. %
We thus abstract away this secondary aspect to focus on the particularity of decentralized learning.
\end{remark}

In this work, we consider synchronous gossip-based algorithms, where at each step each node performs a weighted average of its value with those of its neighbors, as determined by the weights given by a gossip matrix.

\begin{definition}[Gossip matrix] A gossip matrix $W \in [0,1]^{n \times n}$ over the graph $\mathcal{G}$ is a doubly stochastic matrix ($W \one = W^\top\one = \one$) with $W_{uv}>0$ if and only if there exists an edge between $u$ and $v$.
\end{definition}

\paragraph{Gossip averaging.} The gossip matrix can be used to iteratively compute the average of the private values $(x_v)_{v \in \mathcal{V}}$. Initializing each node $v$ with $\theta_v^0 = x_v$, the gossip averaging iteration \citep{Xiao04} is given by %
\begin{equation}
\label{eq:gossip_avg}
         \theta^{t+1} = W \theta^t.
\end{equation}
This converges to the global average at a geometric rate governed by the spectral gap of $W$ \citep{Xiao04}.

\begin{remark}[Accelerated gossip] An accelerated version of gossip \citep{gossip_berthier}, which involves replacing the multiplication by $W$ with a polynomial of $W$, is often used to speed up convergence. We note that this does not impact the information accessible to a node, as the values observed in accelerated gossip can be easily translated back to the non-accelerated counterpart through a simple linear transformation. For the sake of clarity, our discussion will thus focus on the non-accelerated version.
\end{remark}

\paragraph{Decentralized Gradient Descent (D-GD).} In decentralized learning, nodes aim to optimize an objective function of the form $f(\theta)=\sum_{v=1}^n L(\theta, x_v)$ where $\theta$ represents the parameters of the model and $L$ is some differentiable loss function.
The most popular algorithm to achieve this is D-GD \citep{dec2017,pmlr-v119-koloskova20a}. At each iteration of D-GD, each node performs a local gradient step with a learning rate $\eta>0$ followed by a gossip averaging step with its neighbors.
Let $\theta_v^0$ be an arbitrary initialization of the parameters at each node $v$. We denote the local gradient of node $v$ at iteration $t$ (scaled by $\eta)$ by
\begin{equation}
        g_v^t = - \eta \nabla_{\theta_v^t} L(\theta_v^t, x_v).
\end{equation}
Then, the gradient step of D-GD can be written as
\begin{equation}
    \theta^{t+\frac{1}{2}} = \theta^t + g^t,
    \label{eq:gd1}
\end{equation}
and the gossiping step corresponds to
\begin{equation}
    \theta^{t+1} = W \theta^{t+\frac{1}{2}}.
    \label{eq:gd2}
\end{equation}

Under suitable assumptions, D-GD converges to a global or local optimum of $f$ \citep{pmlr-v119-koloskova20a}.

\subsection{Threat Model}

Throughout this paper, we focus on honest-but-curious attackers that consist of a subset of nodes that adhere to the protocol but attempt to extract as much information as they can from their observations. We denote by $\mathcal{A}\subset \mathcal{V}$ the set of attacker nodes and by $\mathcal{N}(\mathcal{A})=\bigcup_{a\in \mathcal{A}}\mathcal{N}(a)\setminus \mathcal{A}$ the neighbors of the attackers. Without loss of generality, we assume that attacker nodes correspond to the first $|\mathcal{A}|$ nodes. When $|\mathcal{A}|>1$, we assume that the knowledge is completely shared between attacker nodes, even if the network graph does not contain edges between them.

We also assume that the network graph and the gossip matrix are known to the attacker. This assumption is often naturally satisfied in practical use cases (e.g.,  within social networks), and we further discuss its relevance in \cref{app:pubW}.

\section{Reconstruction in Gossip Averaging}
\label{sec:avg}

\begin{figure*}[t]
    \centering
    \includegraphics[width=\textwidth]{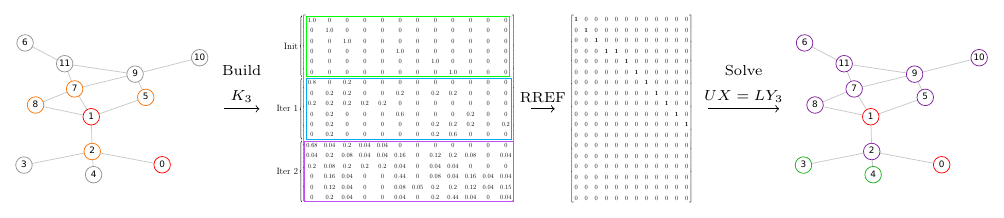}
    \caption{Overview of our attack on gossip averaging. The attackers $0$ and $1$ (red) receive updates from nodes $2$, $5$, $7$ and $8$ (orange). For $T=3$ iterations, it leads to the knowledge matrix $K_3$. Its RREF (matrix $U$) exhibits that only nodes $3$ and $4$ are non-reconstructible (green). All other nodes (purple) have their private value leaked.}
    \label{fig:gossipreconstruction}
\end{figure*}

In this section, we describe our attack on gossip averaging. The key idea of our attack is that each message $\theta^t_v$ received by an attacker node $a\in \mathcal{A}$ from one of its neighbors $v\in \mathcal{N}(a)$ corresponds to a linear equation where the unknown are the private values of the nodes of the graph and the coefficients depend on the gossip matrix $W$ (which is known to the attackers).
Our attack consists in accurately generating this system of linear equations ($K_T X = Y_T$ in our notations) and then solving it in order to reconstruct as many private values as possible. We provide a visual summary of this reconstruction process in \cref{fig:gossipreconstruction}.

\paragraph{Collecting linear equations.} For a given gossip matrix $W$ and set of attackers $\mathcal{A}$, we now describe how the attackers can systematically construct a system of linear equations capturing the knowledge that they gather about the private values throughout $T$ iterations of gossip averaging. Informally, the attackers receive $|\mathcal{A}| + T \cdot |\mathcal{N}(\mathcal{A})|$ values, and they can associate each of these values to a linear combination of the $n$ private inputs. At the beginning of the protocol, the attackers already know their inputs (these are the first $|\mathcal{A}|$ values). Then, at each round $t$ of gossip, the attackers receive one value $\theta_v^{(t)}$ from each of their neighbors $v\in \mathcal{N}(\mathcal{A})$. These received values correspond to a linear combination of private inputs, where the weights are given by the corresponding powers of the gossip matrix.

Formally, we denote this system of linear equations by $K_T X = Y_T$, where $X=(x_0,\dots,x_n)\in\R^n$ is the vector of private values that the attackers seek to reconstruct, and $K_T,Y_T$ are defined as follows.
\begin{definition}[Knowledge matrix and observation vector]
The \emph{knowledge matrix} $K_T\in\R^{{|\mathcal{A}| + T \cdot |\mathcal{N}(\mathcal{A})|}\times n}$ is defined by \cref{alg:knMatrix}. The \emph{observation vector} $Y_T \in \R^{|\mathcal{A}| + T \cdot |\mathcal{N}(\mathcal{A})|}$ is obtained by having attackers stack their own values and the received messages $(\theta^t_v)_{0\leq t\leq T-1,v\in \mathcal{N}(\mathcal{A})}$. The pair $(K_T, Y_T)$ forms the view of the attackers.
\end{definition}
Note that $K_T$ only depends on the gossip matrix, whereas $Y_T$ is specific to the private values.

For concreteness, let us consider a simple example. Consider a graph of $n$ nodes with one attacker at position $0$. The attacker knows his own private input $x_0$ (stored in the first entry of $Y_T$), which corresponds to the insertion of the one-hot vector $(1,0,\dots, 0)$ in the first row of the knowledge matrix $K_T$. Assuming that nodes $1$ and $2$ are the neighbors of the attacker, they will send $\theta_1^{(0)}=x_1$ and $\theta_2^{(0)}=x_2$ to the attacker at iteration $0$ (stored in the second and third entries of $Y_T$), corresponding to the insertion of $(0,1,0,\dots, 0)$ and $(0,0,1,0,\dots, 0)$ respectively in the second and third rows of $K_T$. At iteration $1$, they will send $\theta_1^{(1)}=\sum_{j} W_{j,1} x_j$ and $\theta_2^{(1)}=\sum_{j} W_{j,2} x_j$, corresponding to $(W_{0,1}, \dots, W_{n-1,1})$ and $(W_{0,2}, \dots, W_{n-1,2})$ respectively. In general, for each iteration $t$, the attacker receives the values $\theta_1^{(t)}=\sum_{j} W^t_{j,1} x_j$ and $ \theta_2^{(t)}=\sum_{j} (W^t)_{j,2} x_j $ from his two neighbors, by definition of the gossip averaging algorithm, and this information is stored in the corresponding rows of $K_T$ and $Y_T$.

\paragraph{Solving the linear system.} Recovering the private values corresponds to solving the equation $K_T X=Y_T$ where $X$ is the unknown. The set of solutions of this equation is always non-empty by construction (as the set of private values satisfies the equation), and is reduced to a single element when $K_T$ is full rank (i.e., rank $n$). Thus, if $K_T$ is full rank, attackers can reconstruct all the private values of the nodes. Otherwise, attackers may still reconstruct a subset of the private values, and deduce relationships between the ones that cannot be fully reconstructed.

To do so, we factorize $K_T$ using its Reduced Row Echelon Form (RREF), namely $K_T = L^{-1}U$ where $U$ is the unique RREF of $K_T$ and $L$ is such that $U X = L Y_T$.
RREF is a form often introduced in algebra courses to teach Gauss-Jordan elimination \citep{rref}. In this form, the block of matrix $U$ corresponding to reconstructible nodes becomes the identity matrix, while the remaining rows (corresponding to non-reconstructible nodes) contain the linear equations that link their values together. Solving $K_T X = Y_T$ is thus equivalent to solving $U X = L Y_T$ with trivial equations $1 \times X_v = (LY_T)_v$ for reconstructible nodes.
Hence, this decomposition allows us to clearly identify the nodes that our attack can reconstruct, even before the algorithm is executed (as the attackers only need to construct $K_T$ but not $Y_T$).

\begin{algorithm}
\caption{Knowledge matrix construction}\label{alg:knMatrix}
\begin{algorithmic}[1]
\State \textbf{Inputs:} the graph $\mathcal{G}$, the set of attackers $\mathcal{A}$, the number of iterations $T$
\State \textbf{Initialization:} $K_T$ an empty matrix of size $m\times n$ where $m = |\mathcal{A}| + T \cdot |\mathcal{N}(\mathcal{A})|$
\For {each $v \in \mathcal{A}$}
    \State $K_T[v,:] \leftarrow e_v$, where $e_v$ is the one-hot vector of size $n$ with $1$ at index $v$
\EndFor
\State $i \gets |\mathcal{A}|$
\For{ $t$ from $0$ to $T-1$}
    \For{each $v \in \mathcal{N}(\mathcal{A})$}
        \State $K_T[i,:] \gets W^t[v,:]$ %
        \State $i \gets i + 1$
    \EndFor
\EndFor
\State \textbf{Return:} $K_T$
\end{algorithmic}
\end{algorithm}

\begin{definition}[Reconstructible node] \label{d:indistinct}
Given a network graph $G$ and a set of attackers $A$, a node $v$ is said to be reconstructible by $\mathcal{A}$ after $T$ iterations if in the RREF form $U$ of $K_T$, the row corresponding to $v$ is a vector with the value $1$ in a single entry and $0$ everywhere else.
\end{definition}

    A complete characterization of reconstructible nodes using explicit graph-related quantities, instead of linear algebra as in Definition~\ref{d:indistinct}, appears to be quite challenging.
    In \cref{app:counter}, we provide some counter-intuitive examples on simple graphs to illustrate this point.

\begin{remark}[Secure aggregation]
    \label{rmk:secagg}
    One of the defense mechanisms widely studied in federated learning is the use of secure aggregation (SecAgg), where a set of parties jointly compute the sum of their private values without revealing more than the final output \citep{Bonawitz2017a}. This could be used in gossip averaging: at each iteration, each node could compute the weighted sum by running a SecAgg protocol with its neighbors \citep{Dekker2023TopologyBasedRP}. Aside from the high computation and communication overhead, in terms of leakage, this would be akin to an attack in a model without SecAgg from a node only connected to the true attacker. We note that our attack still works in this case, with a slight modification in the construction of the knowledge matrix. As illustrated by \cref{fig:line_graph_MNIST} or \cref{fig:ego414}, numerous nodes can have their data leaked in the case where the attacker has a single neighbor.
\end{remark}

\begin{remark}[Extension to dynamic networks]
\looseness=-1 Our approach can be adapted to dynamic networks where the graph changes over time, as long as the attackers know the gossip matrix $W_t$ used at each iteration. Indeed, the reconstruction problem is equivalent to the one with a static $W$, up to the modification of the construction of the knowledge matrix. In some cases, dynamic networks might be more vulnerable to reconstruction: the union of the direct neighbors of the attackers could be larger than for static networks, so the proportion of nodes that can be directly reconstructed might increase.
\end{remark}

\section{Reconstruction in D-GD}

In this section, we present our attack on Decentralized Gradient Descent. Our attack proceeds in two steps: we first reconstruct the gradients and then reconstruct the data points from the reconstructed gradients. The latter step can be done by resorting to existing gradient inversion attacks as a black box \citep[see e.g.,][]{DeepLeakage,fedLearningAttacks,kariyappa2022cocktail}. Therefore, in the rest of the section, we focus on the gradient reconstruction part, which is the core of our contribution.

To reconstruct the gradients of nodes, our attack builds upon the attack on gossip averaging presented in Section~\ref{sec:avg}. However, several additional challenges arise. While the private values in gossip averaging remain constant across iterations, the gradients in D-GD evolve over time. This means that each step brings new unknowns in the equation, making it impossible to find an exact solution through direct equation solving.  Attacking D-GD thus requires additional steps:
\begin{enumerate}[(i)]
    \item Reducing the number of unknowns by introducing similarity assumptions on the gradients;
    \item Changing the construction of the knowledge matrix, to reconstruct the gradients $g_v^t$ instead of the model parameters $\theta^t$;
    \item Removing the attackers' own contributions to reduce overall noise in the approximated reconstruction;
\end{enumerate}

\paragraph{(i) Gradient similarity.} We derive our reconstruction attack under the assumption that the gradients of a node along the iterations can be described as a combination of a fixed and a random component.

\begin{assumption}[Noise-signal gradient decomposition]
    For each node $v \in \mathcal{V}$, we assume that we can decompose the gradient update as follows
    \begin{equation}
        g_v^t = - \eta \nabla_{\theta_v^t} L(\theta_v^t, x_v) = g_v + N_v^t, 
    \end{equation}
    where $N_v^t$ is a centered random variable with variance $\sigma^2$, and the constant part $g_v$ is specific to node $v$ but stays the same across iterations.
    \label{assum:gau}
\end{assumption}

We note that this assumption is typically not satisfied in real use cases, but we will see in Section~\ref{sec:exp} that the algorithm is robust in practice to small violations of this assumption (in particular, it works well when gradients change sufficiently slowly across iterations).

\begin{algorithm}[t]
\caption{Building the knowledge matrix for D-GD}\label{alg:knMatrixgd}
\begin{algorithmic}[1]
\State \textbf{Inputs:} the graph $\mathcal{G}$, the set of attackers $\mathcal{A}$, the set of targets $\mathcal{T} = \mathcal{V}\setminus \mathcal{A}$, the number of iterations $T$
\State \textbf{Initialization:} $K_T$ an empty matrix of size $m\times n$ where $m = T \cdot |\mathcal{N}(\mathcal{A})|$.

\State $i \gets 0$
\For{ $t$ from $0$ to $T-1$}
    \For{each $v \in \mathcal{N}(\mathcal{A})$}
        \State $K_T[i,:] \gets (\sum_{j=0}^{t} W_{\mathcal{T},\mathcal{T}}^j)[v-|\mathcal{A}|,:] $ %
        \State $i \gets i + 1$
    \EndFor
\EndFor
\State \textbf{Return:} $K_T$
\end{algorithmic}
\end{algorithm}

\begin{remark}[Connection to differential privacy]
    Assumption~\ref{assum:gau} naturally models situations where noise is added to satisfy differential privacy \citep{Huang2015a,9524471,cyffers2022muffliato}.
    In particular, one can see this as an instance of averaging under local differential privacy. Naturally, the accuracy of the reconstruction will be directly related to the variance of the noise, and thus to the chosen privacy budget.
\end{remark}

\paragraph{(ii) Knowledge matrix construction.}
Denoting the set of target nodes as $\mathcal{T}=\mathcal{V}\setminus \mathcal{A}$, we rewrite the gossip matrix $W$ as follows:
\begin{equation}
    W = \begin{pmatrix}
W_{\mathcal{A},\mathcal{A}} & W_{\mathcal{A},\mathcal{T}} \\
W_{\mathcal{T},\mathcal{A}} & W_{\mathcal{T},\mathcal{T}}
\end{pmatrix}
\end{equation}

We now write the values $\theta^{t+\frac{1}{2}}$ shared by nodes during the execution of the algorithm in terms of this decomposition.

\begin{proposition}[Closed-form of D-GD updates]
    For the D-GD algorithm described by \eqref{eq:gd1} and \eqref{eq:gd2}, we have:
\begin{equation}
    \theta^{t+\frac{1}{2}} = \begin{pmatrix}
    \theta^{t+\frac{1}{2}}_\mathcal{A} \\ 
    \left(\sum\limits_{i=0}^{t} W_{\mathcal{T},\mathcal{T}}^i\right) g_\mathcal{T} + 
    \sum\limits_{i=0}^{t} W_{\mathcal{T},\mathcal{T}}^i N_T^{t-i} \\
    + \sum\limits_{i=0}^{t-1} W_{\mathcal{T},\mathcal{T}}^{t-1-i} W_{\mathcal{T},\mathcal{A}} \theta^{i+\frac{1}{2}}_\mathcal{A}
    \end{pmatrix}.
\end{equation}
\end{proposition}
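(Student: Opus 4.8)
The plan is to collapse the two half-steps of D-GD into a single recurrence on the post-gradient iterates $\theta^{t+\frac{1}{2}}$ and then solve it block by block. Substituting the gossip update \eqref{eq:gd2} into the gradient update \eqref{eq:gd1} yields, for $t\ge 1$,
\[
  \theta^{t+\frac{1}{2}} = W\,\theta^{t-\frac{1}{2}} + g^t ,
\]
with base case $\theta^{\frac{1}{2}} = \theta^0 + g^0$. Taking the standard convention $\theta^0 = 0$ (a known initialization contributes only a deterministic, subtractable term $W^t\theta^0$, so this is without loss of generality), the base case reads $\theta^{\frac{1}{2}} = g^0$.

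Next I would read off the rows indexed by the target set $\mathcal{T}$ using the block form of $W$. Writing $M := W_{\mathcal{T},\mathcal{T}}$ for brevity, the target component obeys
\[
  \theta^{t+\frac{1}{2}}_\mathcal{T} = M\,\theta^{t-\frac{1}{2}}_\mathcal{T} + W_{\mathcal{T},\mathcal{A}}\,\theta^{t-\frac{1}{2}}_\mathcal{A} + g^t_\mathcal{T} .
\]
This is an inhomogeneous linear recurrence in $\theta^{\bullet}_\mathcal{T}$: the target block propagates only through powers of $M$, while the attacker half-step values $\theta^{\bullet}_\mathcal{A}$ enter as an exogenous forcing term (treated as known, since they are eventually observed). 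This coupling is precisely what generates the third summand in the claimed formula.

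I would then substitute the decomposition $g^t_\mathcal{T} = g_\mathcal{T} + N_\mathcal{T}^t$ from Assumption~\ref{assum:gau} and unroll the recurrence (equivalently, argue by induction on $t$). Iterating down to the base case gives
\[
  \theta^{t+\frac{1}{2}}_\mathcal{T} = \sum_{s=0}^{t} M^{t-s} g_\mathcal{T} + \sum_{s=0}^{t} M^{t-s} N_\mathcal{T}^{s} + \sum_{s=1}^{t} M^{t-s} W_{\mathcal{T},\mathcal{A}}\,\theta^{s-\frac{1}{2}}_\mathcal{A} .
\]
A change of summation index then puts each term in the stated form: setting $i = t-s$ in the first two sums turns them into $\big(\sum_{i=0}^{t} M^{i}\big)g_\mathcal{T}$ and $\sum_{i=0}^{t} M^{i} N_\mathcal{T}^{t-i}$, while setting $i = s-1$ in the third turns it into $\sum_{i=0}^{t-1} M^{t-1-i} W_{\mathcal{T},\mathcal{A}}\,\theta^{i+\frac{1}{2}}_\mathcal{A}$, which together reconstitute $\theta^{t+\frac{1}{2}}_\mathcal{T}$ in exactly the claimed form.

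There is no deep obstacle here: the statement is just the closed-form solution of an inhomogeneous linear recurrence, and the work is almost entirely bookkeeping. The two places that need care are (a) the base case $t=0$, which must reproduce $\theta^{\frac{1}{2}}_\mathcal{T} = g^0_\mathcal{T}$ and force the (empty) attacker sum to vanish, and (b) the one-step shift between the gradient terms and the attacker forcing term, which is why the third sum runs only to $t-1$ and carries the factor $M^{t-1-i}$. In an inductive proof, the single nontrivial check is that multiplying the inductive hypothesis by $M$ and adding $W_{\mathcal{T},\mathcal{A}}\theta^{t-\frac{1}{2}}_\mathcal{A} + g_\mathcal{T} + N_\mathcal{T}^{t}$ increments each sum's upper limit by exactly one, which the reindexing above confirms.
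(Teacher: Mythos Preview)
Your proposal is correct and follows essentially the same approach as the paper, which simply states that the proof is ``by induction, applying \eqref{eq:gd1} and \eqref{eq:gd2} and rearranging the terms.'' You have spelled out that induction in full: collapsing the two half-steps into the recurrence $\theta^{t+\frac{1}{2}} = W\theta^{t-\frac{1}{2}} + g^t$, isolating the $\mathcal{T}$-block, unrolling, and reindexing are exactly the details the paper omits, and your treatment of the base case (taking $\theta^0=0$ without loss of generality, since a known initialization is subtractable) is consistent with the absence of any $\theta^0$ term in the claimed formula.
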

\begin{proof}
    The proof is done by induction, applying the \cref{eq:gd1} and \cref{eq:gd2} and rearranging the terms.
\end{proof}
\looseness=-1 This formula leads to a more complex computation of the knowledge matrix $K_T$ (see \cref{alg:knMatrixgd}) compared to the one used for gossip averaging, because we want to reconstruct the gradients $g=(g_v)_{v\in \mathcal{V}}$ (not model parameters $\theta^t$).

\paragraph{(iii) Attackers' contributions removal.}
Given $Y_T$ the concatenated vector of updates received by the attackers until iteration $T$, the attackers need to preprocess it in order to remove their own contributions. \Cref{alg:prep} shows how to compute $\hat{Y}_T$ from $Y_T$ and the gossip matrix.  

\begin{algorithm}[t]
\begin{algorithmic}
	
\caption{Removing the attackers' contributions} \label{alg:prep}

\State \textbf{Inputs :} the gossip matrix $W$ of the graph $\mathcal{G}$, the set of attackers $\mathcal{A}$, the set of targets $\mathcal{T} = \mathcal{V} \setminus \mathcal{A}$, the number of iterations $T$, the dimension of the model $d$, the received updates $Y_T$ and the concatenated vector of the updates sent by the attackers $\theta_\mathcal{A} = (\theta_\mathcal{A}^{\frac{1}{2}}, \dots, \theta_\mathcal{A}^{T-\frac{1}{2}})$.

\State Initialize $\hat{Y}_T \in \mathbb{R}^{T  \times |\mathcal{N}(\mathcal{A})| \times d}$ 
\State Initialize $B \in \mathbb{R}^{|\mathcal{T}|\times d} $ with zeros

\For{ $t \in 0,1, \dots , T-1$}
    \State $\hat{Y}_T[t,:] \leftarrow Y_T[t,:] - B[\mathcal{N}(\mathcal{A}),:] $ 
    \State $B \gets W_{\mathcal{T},\mathcal{T}} B + W_{\mathcal{T},\mathcal{A}} \theta_\mathcal{A}^{t+\frac{1}{2}} $ \Comment{The contribution of the attackers to be eliminated}
\EndFor

\State \textbf{Return :} $\hat{Y}_T$
\end{algorithmic}
\end{algorithm}

\begin{figure*}[ht]
    \centering
    \includegraphics[width=.99\textwidth]{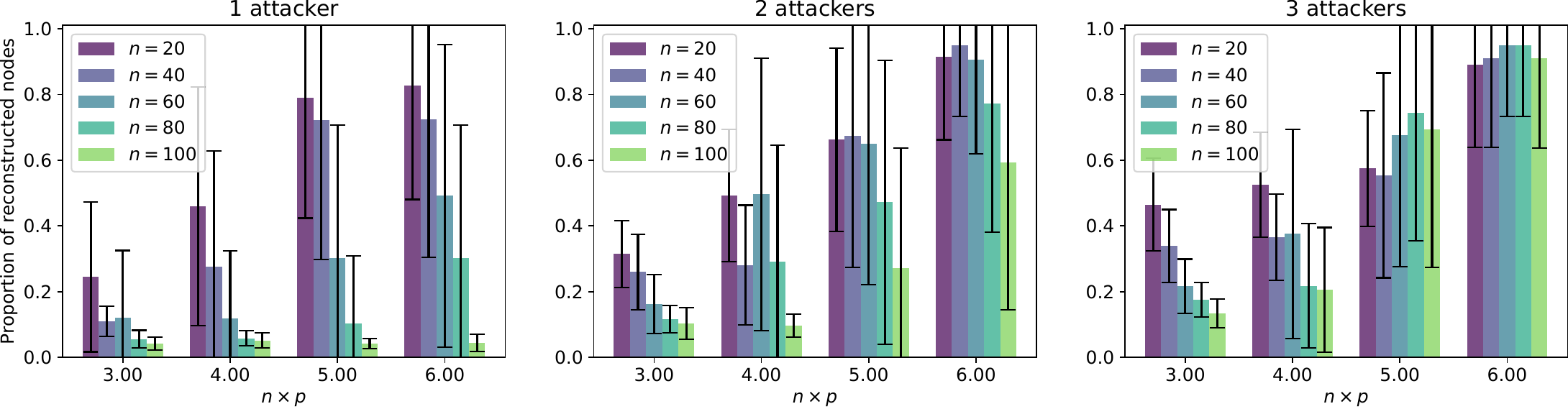}
        \caption{Average fraction of reconstructed nodes in Erdös-Rényi graphs with a different number of nodes $n$ and edge probability $p$, for $1,2$ or $3$ attacker nodes. Error bars give the standard deviations, computed over $20$ random graphs.}
    \label{fig:ER_recons}
\end{figure*}

\paragraph{Gradient reconstruction.}
Equipped with the previous concepts, reconstructing the gradients reduces to a Generalized Least Square (GLS) problem:
\begin{equation}
    K_T g + \eps_T= \hat{Y}_T,
\end{equation}
where $\eps_T$ is the noise of covariance $\Sigma_T$, the (non-diagonal) covariance matrix resulting from the aggregation of noise from various nodes at each step. The formula and detailed algorithm for computing this covariance matrix is provided in \cref{app:cov} (\cref{app:covMatrix}).

The reconstruction minimizing the squared error is thus given by:
\begin{equation*}
    \hat{g} = (K_T^\top \Sigma_T^{-1} K_T)^{-1} K_T^\top \Sigma_T^{-1} \hat{Y}_T.
\end{equation*}
and this estimator is unbiased with a variance of size $(K_T^\top \Sigma_T^{-1} K_T)^{-1}$ under \cref{assum:gau}.

\begin{remark}[Impact of covariance matrix]
    An alternative to computing this exact covariance matrix $\Sigma_T$ is to solve directly the Ordinary Least Square (OLS). Although this gives a bit more weight to the most noisy points compared to the optimal estimator under \cref{assum:gau}, we found experimentally that the reconstruction quality is quite similar between the two methods. This can be explained by the fact that the assumption of the noise structure is not fully satisfied, and OLS tends to be quite robust in practice.
\end{remark}

\begin{remark}[Gossip protocols with consecutive averaging steps]
\looseness=-1 An existing variant of D-GD involves performing multiple gossip averaging steps for each gradient step \cite{pmlr-v139-kong21a,pmlr-v119-koloskova20a,cyffers2022muffliato, dandi2022dataheterogeneityaware}. Our attack can be easily adapted to this case. In fact, it makes reconstruction easier as it brings D-GD closer to gossip averaging by effectively making gradients constant across several iterations. In scenarios where sufficiently many iterations are performed, we can simply use the reconstruction attack designed for gossip averaging, up to the minor modifications to the knowledge matrix.
\end{remark}

\section{Experimental Results}
\label{sec:exp}

In this section, we show that our attacks on gossip averaging and decentralized gradient descent are effective in practice. We evaluate our attacks on synthetic and real-world graphs, showing successful reconstructions in all cases. Our code is available at \href{https://github.com/AbdellahElmrini/decAttack}{https://github.com/AbdellahElmrini/decAttack}.

\subsection{Gossip Averaging}

\begin{figure}[t]
\centering
\includegraphics[width=0.78\linewidth]{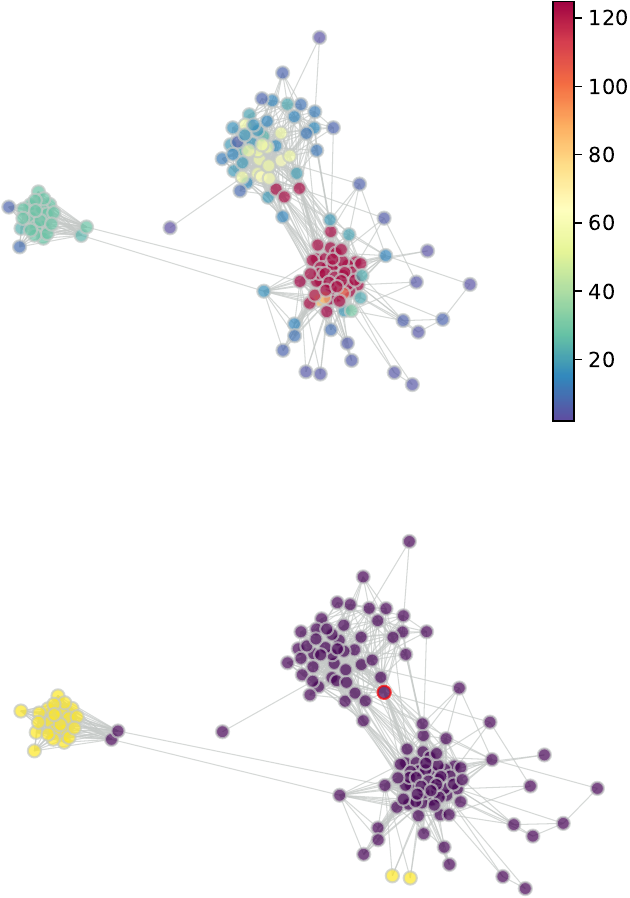}
\caption{Reconstruction attack on the Facebook Ego Graph $414$. Top: each node is colored by the number of nodes it can reconstruct among the $147$ other nodes. Bottom: detailed view of the case where the node circled in red is the attacker, with reconstructed nodes shown in purple and non-reconstructed ones in yellow.}
\label{fig:ego414}
\end{figure}

\paragraph{Synthetic graphs and impact of global characteristics.} We generate Erdös-Rényi graphs with different number of nodes $n$ and different edge probability $p$. We also vary the number of attacker nodes from $1$ to $3$. The proportion of reconstructed nodes in each setting is shown in Figure~\ref{fig:ER_recons}. We can see that a single attacker node is typically able to reconstruct many nodes, well beyond its direct neighborhood. The fraction of reconstructed nodes increases with the connectivity of the graph and the number of attackers. 

\paragraph{Real-world graphs.} We consider the graphs of the Facebook Ego dataset \cite{egofbgraph}, where nodes are the friends of a given user (this central user is not present in the graph) and edges encode the friendship relation between these nodes. These graphs typically present several communities corresponding to different interests. We show that reconstruction is much more likely within clusters, but also occur across nodes belonging to distinct clusters. We give an example in \cref{fig:ego414} and report other Ego graphs in \cref{app:ego}.

\begin{table}[th!]
\centering
\caption{Correlation between the centrality of the attacker and the proportion of nodes it is able to reconstruct.}
\begin{tabular}{@{}l@{\hspace{10pt}}c@{\hspace{20pt}}r@{\hspace{10pt}}c@{}}\toprule
Centrality & Erdos-Renyi graph & Ego graph \\
\midrule
Degree & 0.94 & 0.94\\
Eigenvector & 0.81 & 0.64 \\
Betweenness & 0.84  & 0.66 \\
\bottomrule
\end{tabular}
\label{tab:centrality}
\end{table}

\paragraph{Impact of nodes' characteristics.} Intuitively, it is easier to attack close nodes rather than distant ones. We quantify this effect by evaluating how the attacker centrality impacts its reconstruction ability. Centrality measures are often used in graph mining to measure how important a node is. We test two kinds of graphs. First, we randomly sample Erdös-Renyi graphs with $n = 50$ and $p=0.08$. We reject graphs that are not fully connected and consider a single attacker (node $0$ as the graph construction treats all nodes equally). Second, for a fixed Facebook Ego graph, we make each node play the role of the attacker in turn.
To assess the relation between the centrality of a node and the proportion of the nodes it is able to reconstruct, we use Spearman correlation, a non-parametric measure which is only based on rank statistics. We observe in \cref{tab:centrality} that for both types of graphs, degree centrality is the most correlated with the proportion of reconstructed nodes. Interestingly, other centrality measures that capture some structural properties of the graph beyond immediate neighbors, such as eigenvector and betweenness centrality, also exhibit strong correlation.
In \cref{app:locrec}, we report metrics to assess how the relative position of the attacker and its target impact the probability of reconstruction.

\subsection{Decentralized Gradient Descent}

\begin{figure*}[tbh!]
    \centering
    \begin{subfigure}[Cifar10, logistic regression, learning rate $10^{-4}$]{
        \centering
        \begin{tikzpicture}
            \node[inner sep=0pt] (image1) at (0,0)
                {\includegraphics[width=1\textwidth]{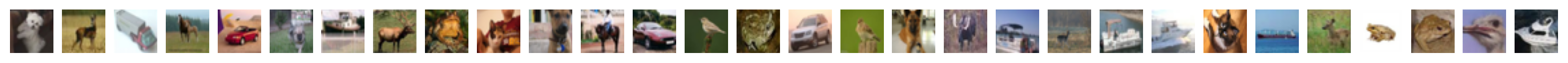}};
            \node[inner sep=0pt] (image2) at (0,-.8)
                {\includegraphics[width=1\textwidth]{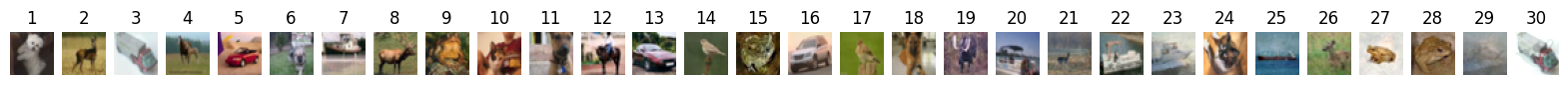}};
        \end{tikzpicture}
        \label{fig:line_graph_Cifar10}
        }
    \end{subfigure}%
    \begin{subfigure}[MNIST, convnet, learning rate $10^{-6}$, gradient inversion from \cite{fedLearningAttacks}]{
        \centering
        \begin{tikzpicture}
            \node[inner sep=0pt] (image3) at (0,0)
                {\includegraphics[width=\textwidth]{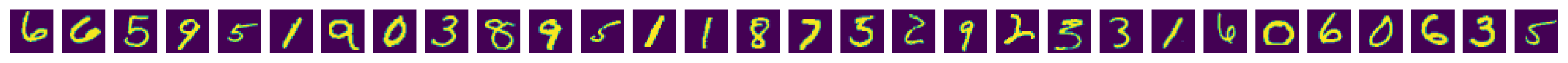}};
            \node[inner sep=0pt] (image4) at (0,-.8)
                {\includegraphics[width=\textwidth]{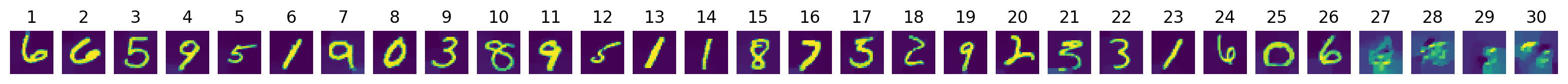}};
        \end{tikzpicture}
        \label{fig:line_graph_MNIST}
        }
    \end{subfigure}
    
    \caption{Reconstruction attack on D-GD for a line graph with $31$ nodes where the attacker lies at an extremity. The first (resp. second) row shows the true (resp. reconstructed) inputs of the 30 other nodes ordered by their distance to the attacker.}
    \label{fig:line_graphs}
\end{figure*}

We now turn to the more challenging case of D-GD. We first focus on the Cifar10 dataset \cite{Krizhevsky2009LearningML} using a model that consists of a fully connected layer with a softmax activation, a bias term, and a cross-entropy loss (a.k.a., logistic regression). For this simple model, one can reconstruct a data point from its gradient in closed-form \citep[see][Lemma 6.1 therein]{biswas}. This allows us to focus the evaluation on the core of our attack (reconstructing gradients), avoiding the inherent errors due to the imperfections of gradient inversion attacks on more complex models. We start running our attack when the model is close to convergence so that gradients are more stable. To ensure that attackers gather enough information about other nodes in the knowledge matrix, we run D-GD for a number of steps roughly equal to the diameter of the graph.

\begin{figure*}[tbh!]
    \centering
    \includegraphics[height=.35\textwidth]{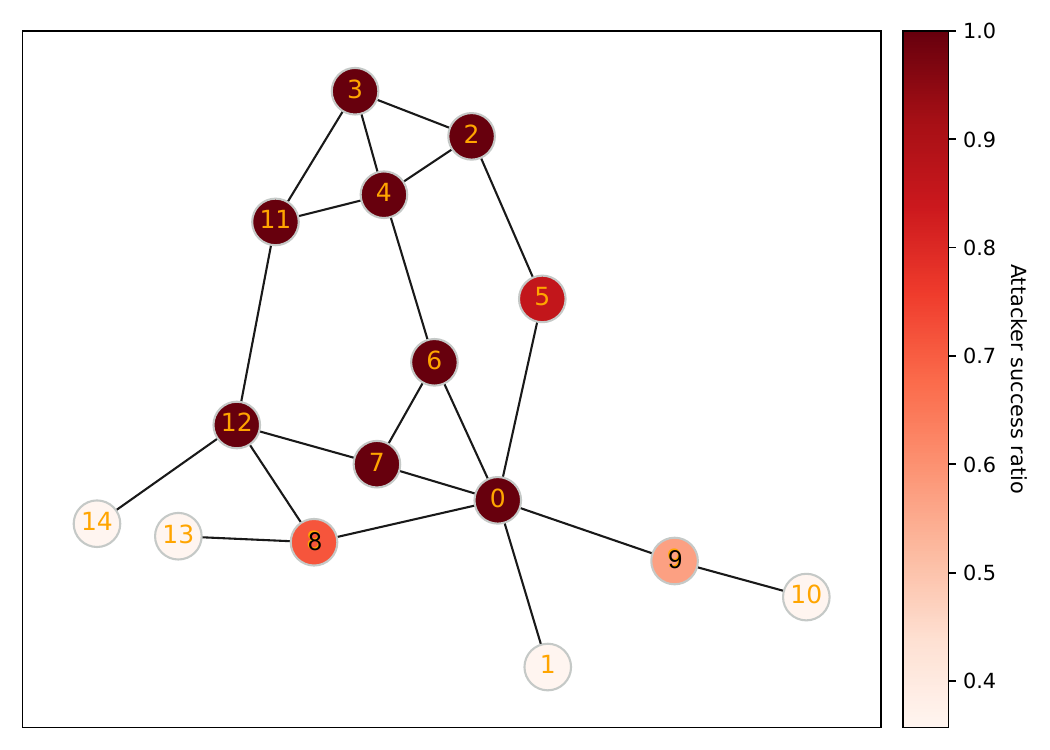}
    \hspace{.05\textwidth}
    \includegraphics[height=.35\textwidth]{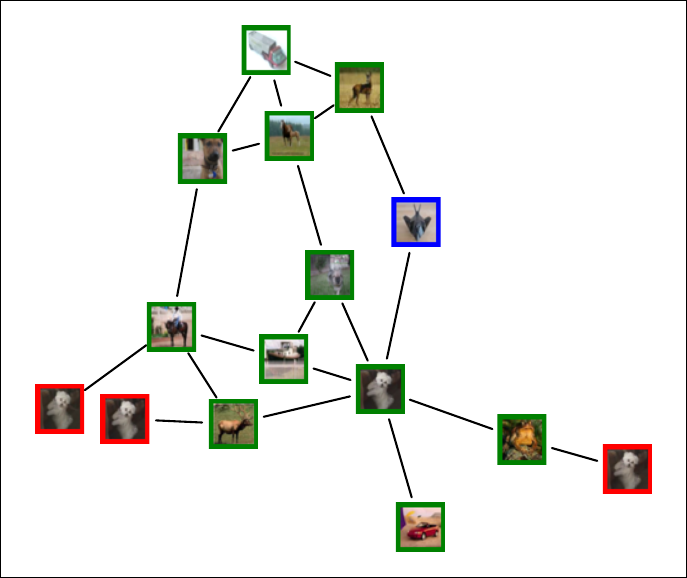}
    \caption{Reconstruction attacks on D-GD for the Florentine graph (Cifar10, logistic regression model, learning rate $10^{-5}$). Left: the color of each node represents the success rate when that node is the attacker. The success rate is measured as the fraction of nodes for which the reconstructed image achieves a PSNR superior to $10$ with respect to the original image (averaged over 10 experiments).
    Right: detailed view of the case where the attacker is node 5 (highlighted with blue borders). Nodes with green borders are accurately reconstructed, the ones with red borders are not. For completeness, the true input images are shown in Appendix~\ref{app:expe_dgd}.}
    \label{fig:ReconstructionFlorentine}
\end{figure*}

We first run our attack on the classic Florentine graph \citep{florentine}, a graph with $n=15$ nodes describing marital relations between families in 15th-century Florence. We can see in \cref{fig:ReconstructionFlorentine} that most nodes (except those located at the edge of the network) can reconstruct a large proportion of other nodes with very good visual accuracy.

\begin{figure}[tbh!]
    \centering
    \includegraphics[width=0.45\textwidth]{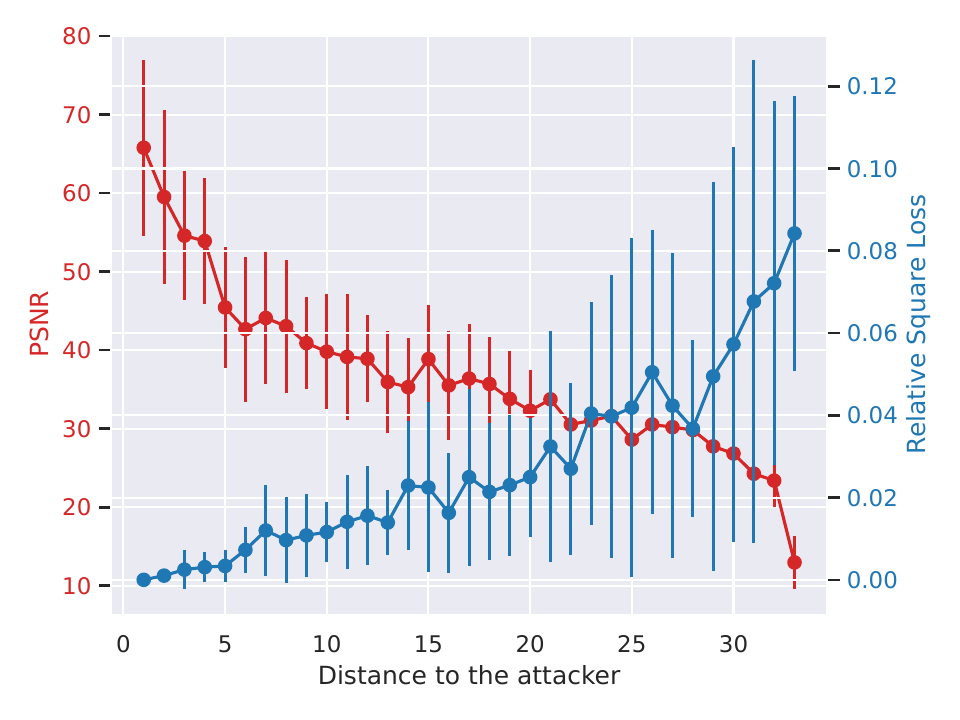}
    \caption{Reconstruction accuracy (measured by PSNR) and error (measured by the relative square distance) as a function of the distance between the victim and the attacker for D-GD on a line graph (see details in Figure~\ref{fig:line_graph_Cifar10}). The plot shows the mean and standard deviation across $100$ experiments.}
    \label{fig:psnr_line}
\end{figure}

We further test the limit of reconstruction by using a line graph of $31$ nodes with the attacker at an extremity. We see in Figure~\ref{fig:line_graph_Cifar10} that the results are far better than one would intuitively expect: even though the gradients of distant nodes are mixed many times before reaching the attacker, our attack allows to disentangle the contributions of the different nodes to enable informative reconstruction up to distance 28. The visual impression is further confirmed by reconstruction metrics over multiple runs (see \cref{fig:psnr_line}).

Finally, we switch to a more complex model for which it is necessary to rely on a gradient inversion attack. We use a small convolutional neural network (see details in Appendix~\ref{app:details}) on the MNIST dataset and the gradient inversion attack of \citet{fedLearningAttacks} as a black box. Using a line graph with $31$ nodes as in the previous experiment, we see in Figure~\ref{fig:line_graph_MNIST} that our approach can naturally rely on a black-box gradient inversion attack to reconstruct data from the gradients of more complex models. Here, the reconstructions are accurate up to distance 26. We refer to Appendix~\ref{app:expe_dgd} for results on the Florentine graph.

We note that the performance of our attack on D-GD is sensitive to several parameters. First, having similar local parameters $\theta_v$ across nodes enables better reconstructions as the observed values are primarily influenced by the gradients rather than by variations in the parameters. This condition is easily satisfied either by initializing all the nodes with the same parameters (a standard practice in D-GD) or by waiting until the system approaches convergence. Second, the learning rate plays an important role: it should be small enough to ensure that gradients do not vary wildly across iterations.
We illustrate this behavior in Appendix~\ref{app:learning_rate}.

\section{Conclusion}

Our work demonstrates the vulnerability of data when using standard decentralized learning algorithms. More precisely, we show that a node can attack and successfully reconstruct the data of non-neighboring (and sometimes quite distant) nodes by leveraging the communication structure inherent to gossip protocols. We also highlight the impact of the graph topology and the position of the attackers in the success rate of the attack in practice. An interesting open question is a full characterization of reconstructible nodes by structural properties of the graph, which appears to be a challenging problem as we discuss in Appendix~\ref{app:counter}. Likewise, optimizing the graph so as to minimize the number of reconstructible nodes is an open question.

Our work shows that one cannot rely on decentralization alone to protect sensitive data. Therefore, to provide robust privacy guarantees, decentralized algorithms must be combined with additional defense mechanisms such as those based on differential privacy \citep{cyffers2022muffliato}.
In future work, we would like to formally and empirically analyze the relation between differential privacy guarantees and the success rate of our reconstruction attacks.

\section*
{Impact Statement}
Our work designs privacy attacks in decentralized learning. Given the ethical challenges in justifying such attacks in real-world scenarios, and considering that the deployment of our attack could potentially cause harm, we primarily focus on raising awareness about privacy risks in decentralized learning by using public datasets. Our findings challenge the common misconception that decentralization inherently brings privacy, particularly for distant nodes. Interestingly, as mentioned in the paper, the construction of the knowledge matrix $K_T$ is independent from the actual private data, and could thus be used as part of an auditing phase to measure how much information would be leaked if one were to use a specific gossip matrix/graph. We believe that sharing these insights is crucial and contributes to the development of safer approaches in machine learning. As highlighted in the introduction and motivation our work should be seen as an empirical illustration of the need for more robust privacy protections, such as the ones offered by differential privacy.

\section*{Acknowledgments}

This work was supported by grant ANR-20-CE23-0015 (Project PRIDE), the
ANR-20-THIA-0014 program ``AI\_PhD@Lille'' and the ANR 22-PECY-0002 IPOP 
(Interdisciplinary Project on Privacy) project of the Cybersecurity PEPR.

Experiments presented in this paper were carried out using the Grid'5000 testbed, supported by a scientific interest group hosted by Inria and including CNRS, RENATER and several Universities as well as other organizations (see https://www.grid5000.fr).

\bibliography{biblio}
\bibliographystyle{icml2024}

\newpage
\appendix
\onecolumn

\begin{center}
{\Large\bf Supplementary Material}
\end{center}

\section{Examples of Reconstructible Nodes}
\label{app:counter}

We illustrate the difficulty of knowing which nodes are reconstructible from explicit graph characteristics on some small examples reported in \cref{fig:counter}. The first graph $G1$ is the smallest one, but has the smallest proportion of reconstructible nodes ($2/4$). Adding a single node as done in $G3$ makes all the nodes become reconstructible. However, adding a new node can also drastically reduce the proportion of reconstructible nodes, as shown in $G4$. Symmetry is not enough to prevent reconstruction, as highlighted in $G2$ where all nodes stay reconstructible.

\begin{figure}[h]
    \centering
    \includegraphics[width=.9\textwidth]{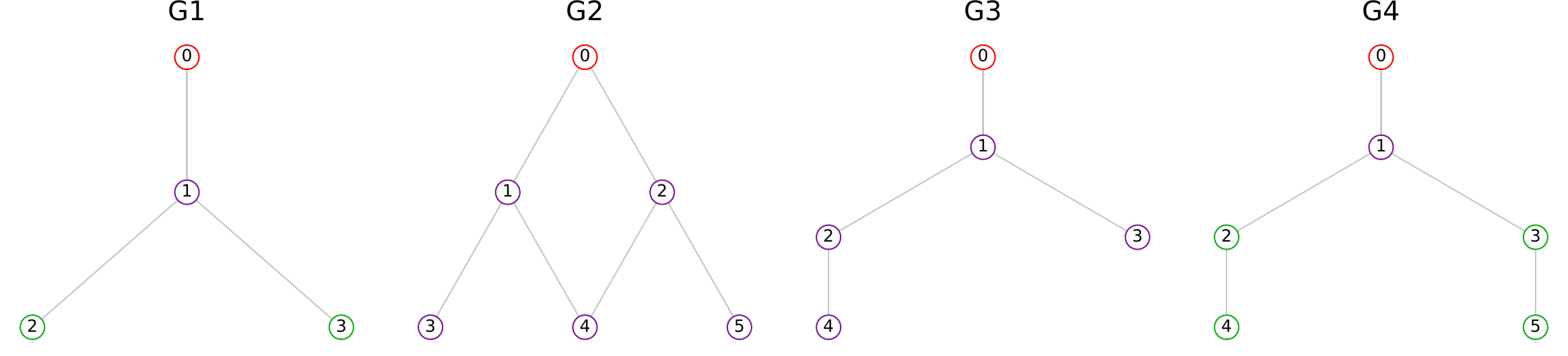}
    \caption{Examples of similar graphs with different reconstructible sets. Attacker is node $0$ (red), reconstructible nodes are in purple, and non-reconstructible ones are in green.}
    \label{fig:counter}
\end{figure}
\section{Construction of the Covariance Matrix for the Reconstruction Attack on Decentralized Gradient Descent}
\label{app:cov}

\Cref{app:covMatrix} shows how to build the covariance matrix $\Sigma^T$, necessary for the GLS method. 
\begin{algorithm}
\begin{algorithmic}
\caption{Building the covariance matrix} \label{app:covMatrix}

\State \textbf{Inputs :} the graph $\mathcal{G}$, the set of attackers $\mathcal{A}$, the number of iterations $T$, $\sigma$ the noise amplitude. 
\State \textbf{Initialization :} $\Sigma^T$ an empty matrix and of size $m\times m$ where $m = T \cdot |\mathcal{N}(\mathcal{A})|$.

\State $ i = 0$

\For{ $t \in 0, \dots ,T-1$}
    \For{ $v \in \mathcal{N}(\mathcal{A})$} 
        \State $j =  t \cdot |\mathcal{N}(\mathcal{A})|$
        \For{ $t' \in t, \dots , T-1$}
            \For{ $v' \in \mathcal{N}(\mathcal{A})$} 
                \State $C[i,j] = \sigma^2(\sum_{l=0}^T W_{\mathcal{T},\mathcal{T}}^{t+t'-2l})[v,v']$  
                \State $C[j,i] = C[i,j] $
                \Comment{Entry $(i,j)$ here corresponds to the pair $((t,v),(t',v'))$} 
                \State $j = j+1$
            \EndFor
        \EndFor
        
        \State $i = i + 1 $
    \EndFor

\EndFor

\State \textbf{Return :} $\Sigma^T$
\end{algorithmic}
\end{algorithm}

\section{Impact of Pairwise Relationship in Reconstruction}
\label{app:locrec}

In this section, we report our findings on how the relationship between the attacker and its target affects the probability of reconstruction in gossip averaging. For a given attacker and target, the reconstruction can either be successful or failed, so it can be seen as a binary label. So how well can we predict this label based on the relation in the graph? This can be measured using Kendall rank correlation coefficient. We consider two relation metrics: the length of the shortest path between the attacker and the target, and their communicability \citep{Estrada_2008}, a measure used in graph mining. It is defined as a weighted average of the probability of going from one node to another in a given number of steps and measures how easy it is to communicate from one node to the other.

We use the same graphs as in the experiment on centrality, namely random Erd\"os Rényi graphs with $0$ as the attacker and the Facebook Ego graph $414$ where each node plays the role of the attacker in turn.
We report the results in \cref{tab:com}. We see that in both cases, the shortest path length provides a good insight on the reconstruction probability, whereas the communicability only shows a small correlation.

\begin{table}[h]
\centering
\begin{tabular}{@{}l@{\hspace{10pt}}c@{\hspace{20pt}}r@{\hspace{10pt}}c@{}}\toprule
Relationship & Erd\"os-Rényi graph & Facebook Ego graph \\
\midrule
Shortest Path Length & $-0.44 \pm 0.09$ &  $-0.51 \pm 0.13$ \\
Communicability & $0.35 \pm 0.05$ & $0.08 \pm 0.35$  \\
\bottomrule
\end{tabular}
\caption{Kendall rank correlation coefficient between communicability, shortest paths, and the probability of reconstruction.}
\label{tab:com}
\end{table}

\section{Example of reconstruction on Random Geometric graphs}

To illustrate the speed of reconstruction in gossip averaging, we report in \cref{fig:random} the reconstruction after $1$ iterations, $4$ iterations and $8$ iterations (doing more iterations does not allow to reconstruct more nodes). Note that our attack support non-fully connected graph, although nodes that are part of different components are of course not reconstructed.

\begin{figure}[t]
    \centering
    \subfigure[After $1$ iteration]{
        \includegraphics[width=0.3\textwidth]{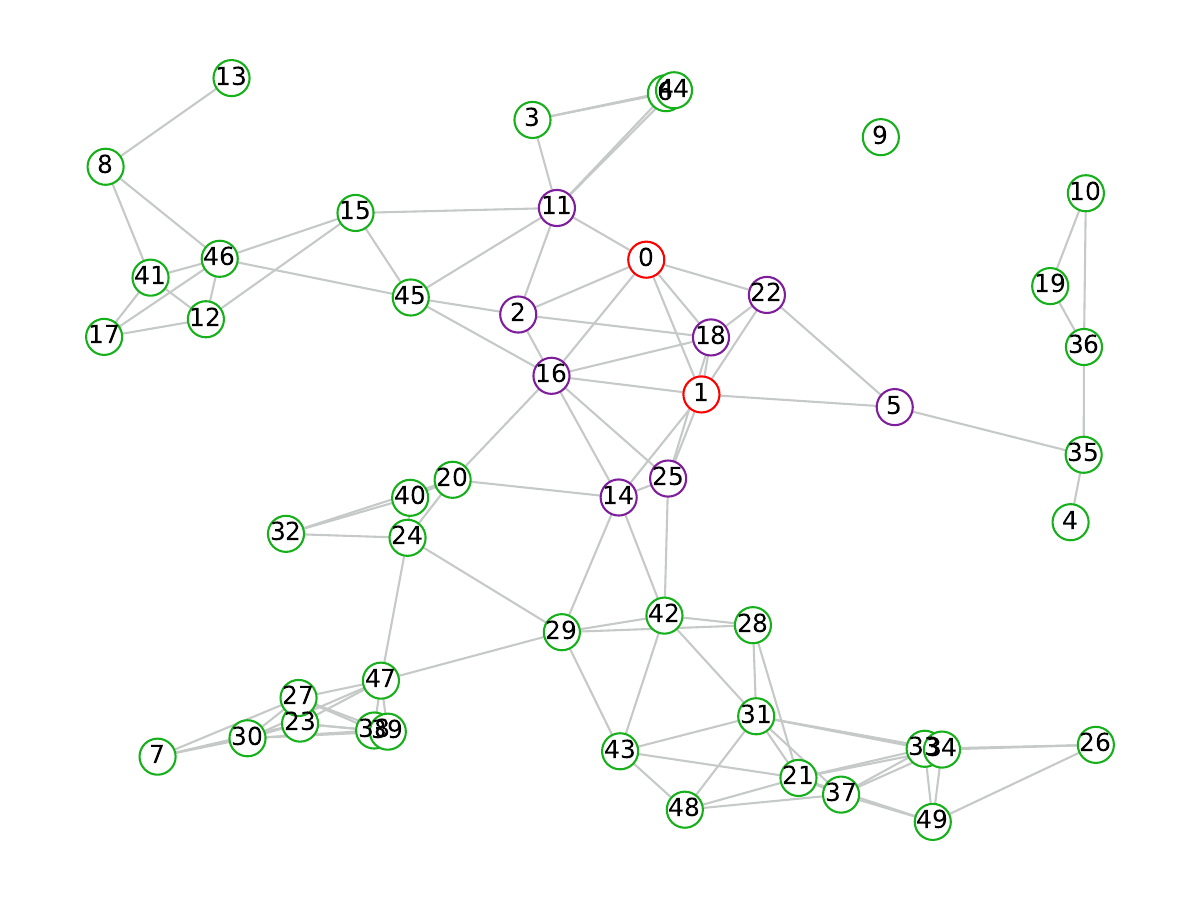}
        \label{fig:image1}
    }
    \subfigure[After $4$ iterations]{
        \includegraphics[width=0.3\textwidth]{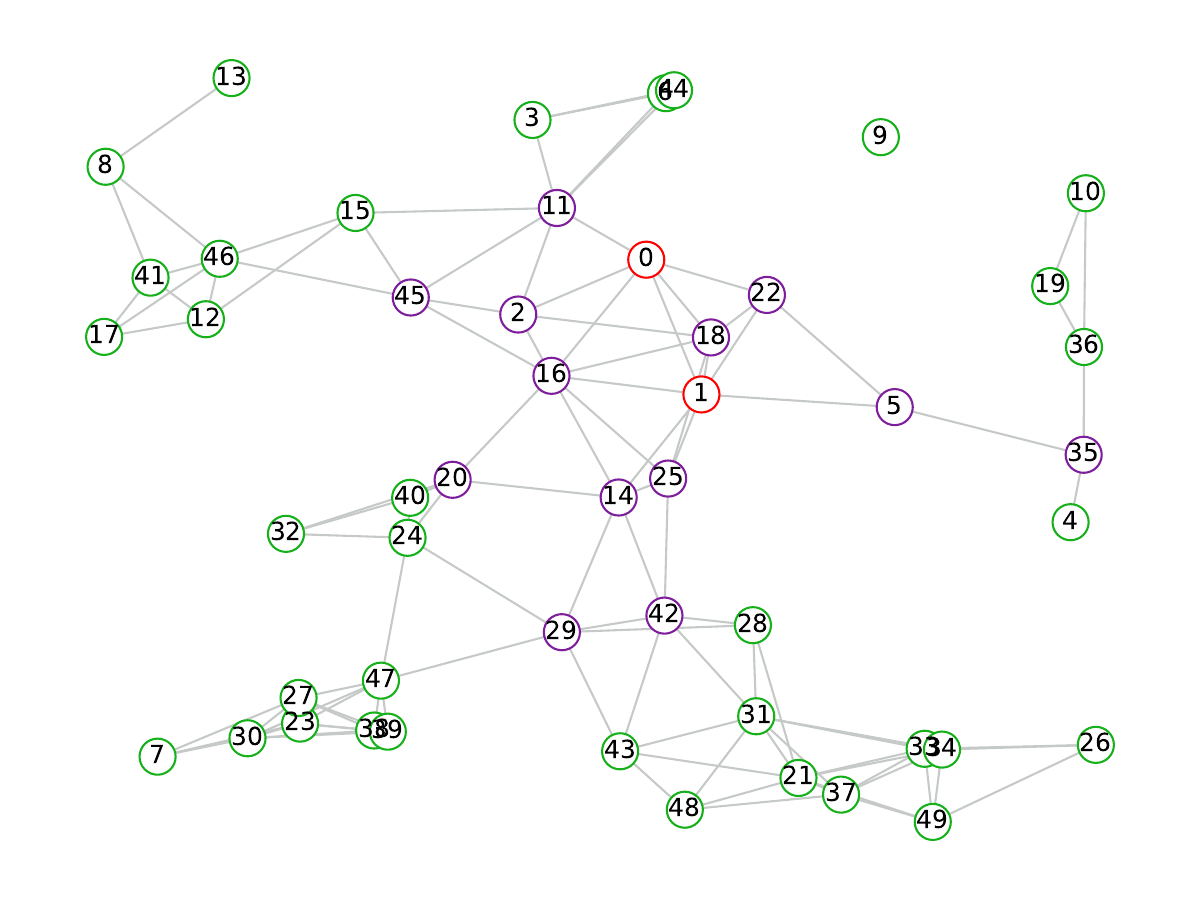}
        \label{fig:image2}
    }
    \subfigure[After $8$ iterations]{
        \includegraphics[width=0.3\textwidth]{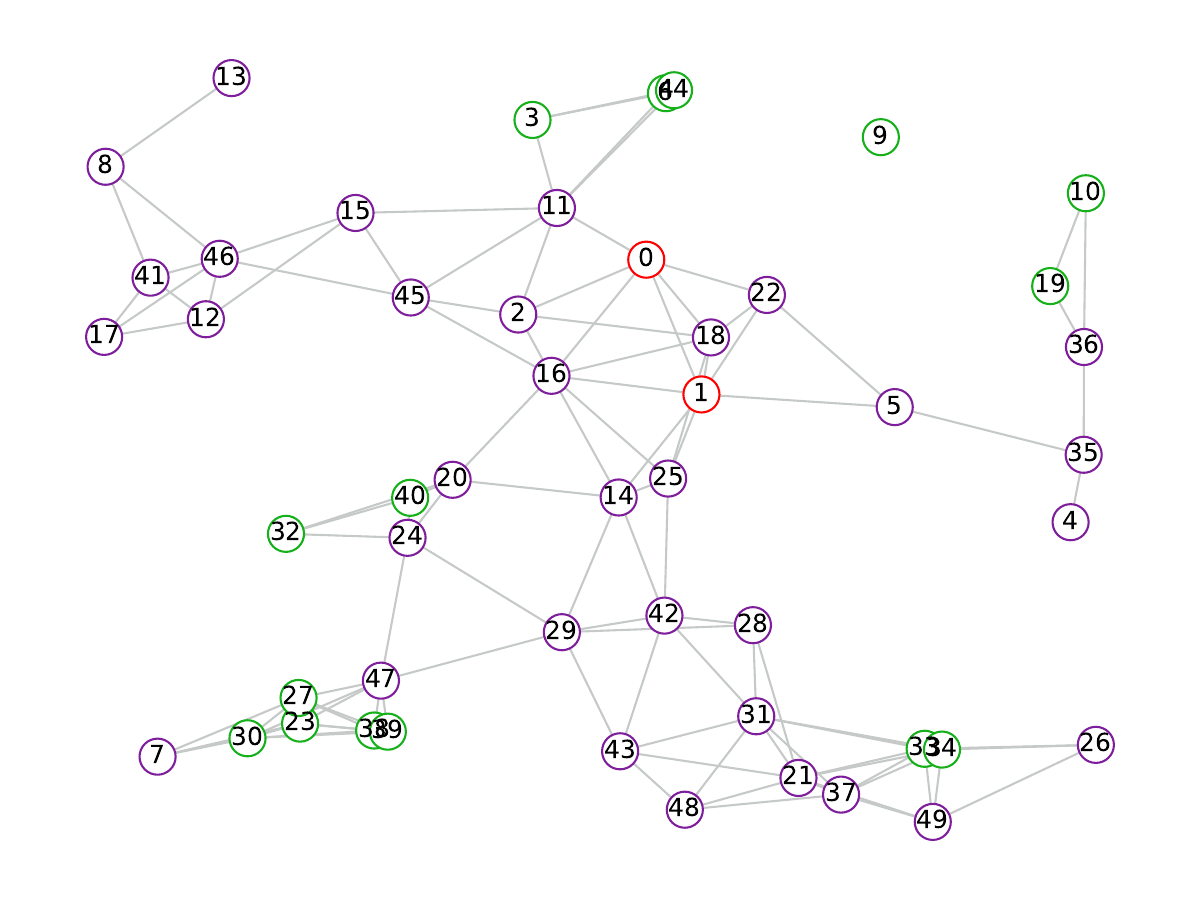}
        \label{fig:image3}
    }
    \caption{Reconstruction after different number of steps of gossip averaging, on a random geometric graph of $50$ nodes uniformly drawn from the unit square and a radius of $0.2$. Attackers are in red, reconstructed nodes in purple, and non-reconstructed ones in green.}
    \label{fig:random}
\end{figure}

\begin{figure}[t]
    \centering
        \centering
        \includegraphics[width=.4\textwidth]{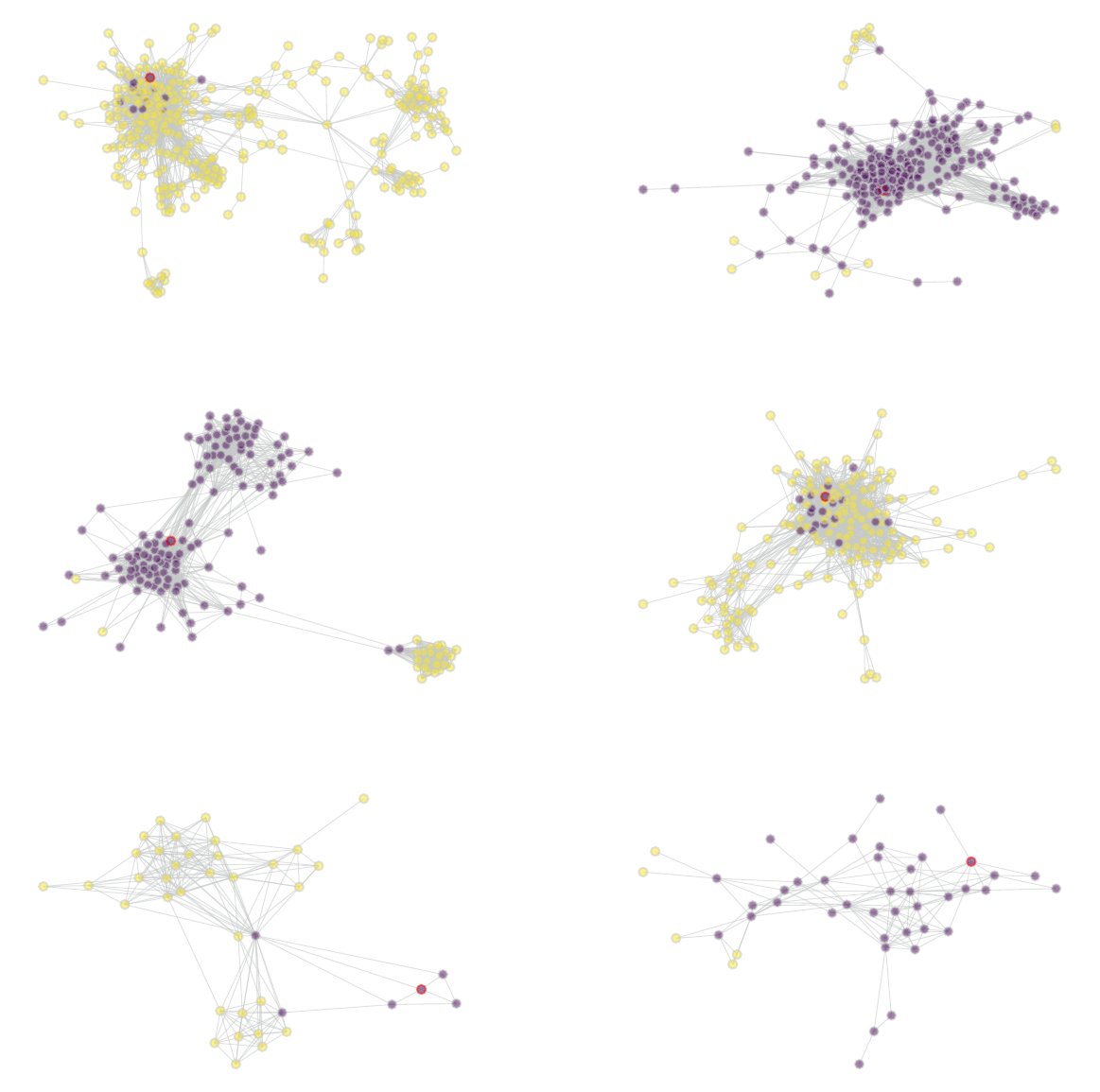}
        \includegraphics[width=.4\textwidth]{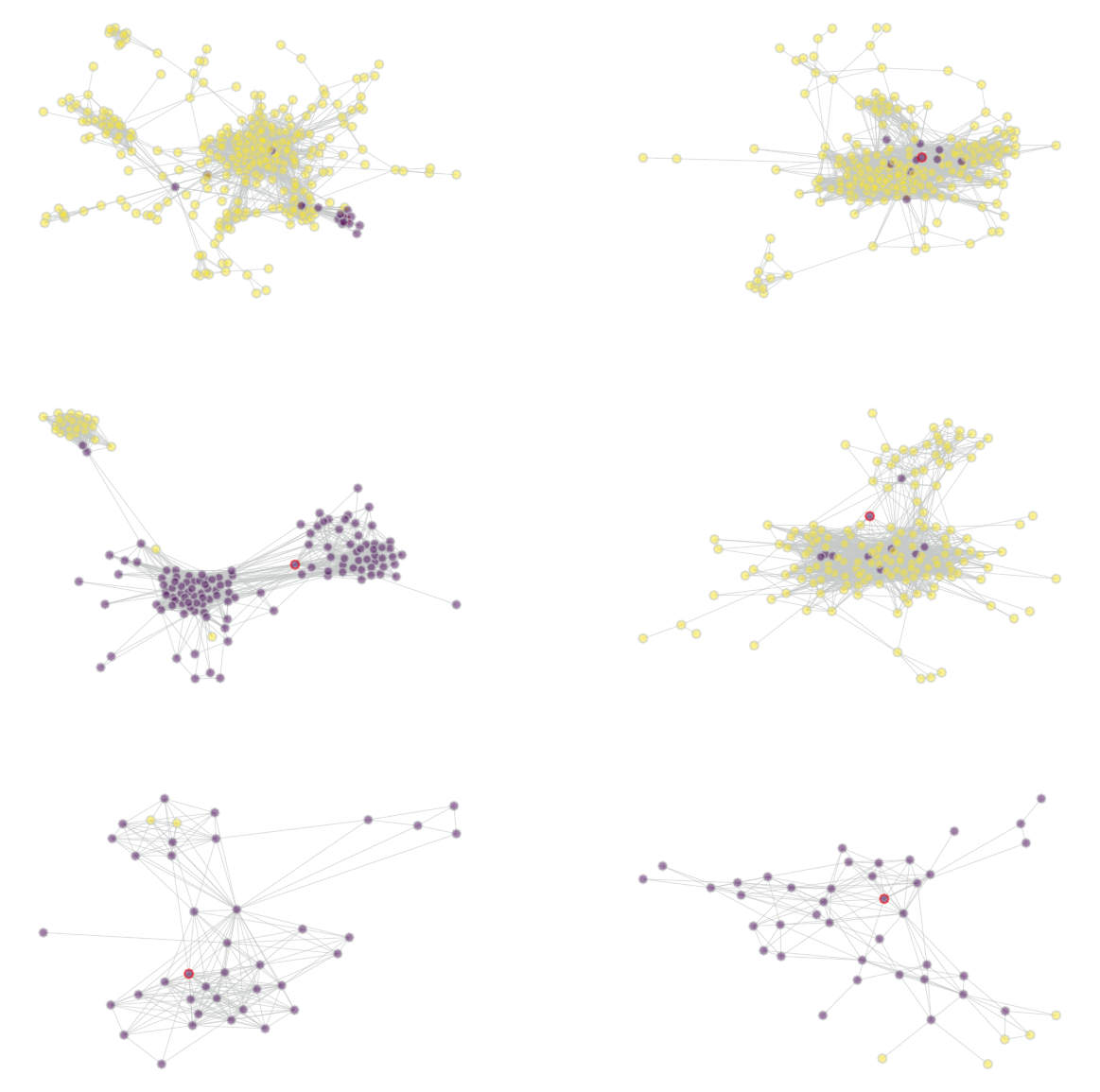}
    \caption{Reconstruction attack on gossip averaging for several Facebook Ego graphs with different attackers chosen randomly. The node circled in red is the attacker, with reconstructed nodes shown in purple and non-reconstructed ones shown in yellow.}
    \label{fig:allego2}
\end{figure}

\section{Reconstruction Attacks on Facebook Ego Graphs}
\label{app:ego}

Figure~\ref{fig:allego2} shows the results of our reconstruction attack on gossip averaging for several Facebook Ego graphs. We report the same graphs twice but with different choices of attackers. This illustrates that the proportion of reconstructed nodes depends both on the graph structure and the specific choice of the attacker. Note that in most cases, a vast majority of the nodes of the same community see their data leaked.

\section{Additional Experimental Results for Reconstruction Attacks on D-GD}
\label{app:expe_dgd}

In Figure~\ref{fig:ReconstructionExampleFlorentine}, we show the true inputs alongside the reconstructed images for the reconstruction attack shown in Figure~\ref{fig:ReconstructionFlorentine}.

In Figure~\ref{fig:ReconstructionExampleFlorentineMnist}, we show an example of reconstruction on the Florentine graph with the MNIST dataset using a Convolutional Neural Network (details in Table~\ref{table:model_architecture}).

\begin{figure*}
    \centering
    \includegraphics[width=\textwidth]{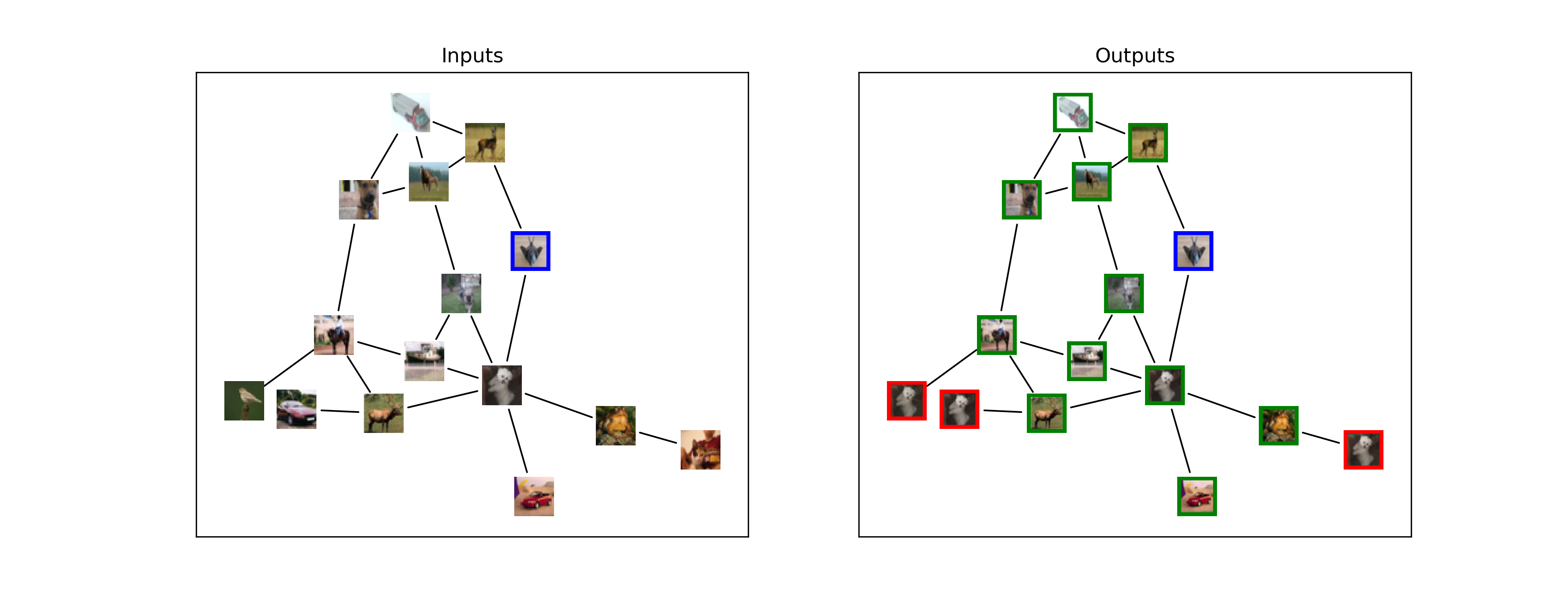}
    \caption{Reconstruction attack on D-GD for the Florentine graph showing the true image inputs (left) and the reconstructions (right).  The attacker is the node with the blue borders. Nodes with green borders are accurately reconstructed, the ones with red borders are not.}
  \label{fig:ReconstructionExampleFlorentine}
\end{figure*}

\begin{figure*}
    \centering
    \includegraphics[width=0.9\textwidth]{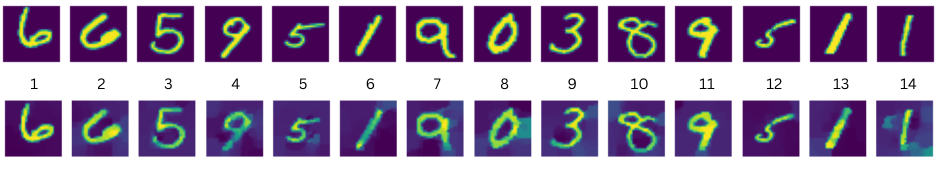}
    \caption{Reconstruction results on D-GD for the Florentine graph. The first (resp. second) row shows the true (resp. reconstructed) inputs (Learning rate $10^{-5}$ and CNN model from \cref{table:model_architecture}). The indices refer to the node labeling in Figure~\ref{fig:ReconstructionFlorentine} where the attacker is node $0$. }
  \label{fig:ReconstructionExampleFlorentineMnist}
\end{figure*}

\section{Details about the Convolutional Network}
\label{app:details}

We report in \cref{table:model_architecture} the following Convolutional Neural Network for the experiments with MNIST.

\begin{table}[h]
\centering
\begin{tabular}{@{}lll@{}}
\toprule
\textbf{Layer Type}    & \textbf{PyTorch Notation Size}  & \textbf{Activation/Pooling} \\ 
\midrule
Convolution            & (1, 32)                & ReLU, Max Pooling           \\ 
Convolution            & (32, 64)               & ReLU, Max Pooling           \\ 
Fully Connected        & (4096, 1024)           & ReLU                        \\ 
Fully Connected        & (1024, 10)             & -                           \\ 
\bottomrule
\end{tabular}
\caption{Model Architecture Description}
\label{table:model_architecture}
\end{table}

\section{Influence of the Learning Rate on the Attack on D-GD}
\label{app:learning_rate}

In \cref{fig:learning_rate}, we show the influence of the learning rate. When the learning rate becomes too large, gradients vary too much across iterations and it becomes impossible to make accurate reconstructions.

\begin{figure}
    \centering
    \includegraphics[width=0.55\textwidth]{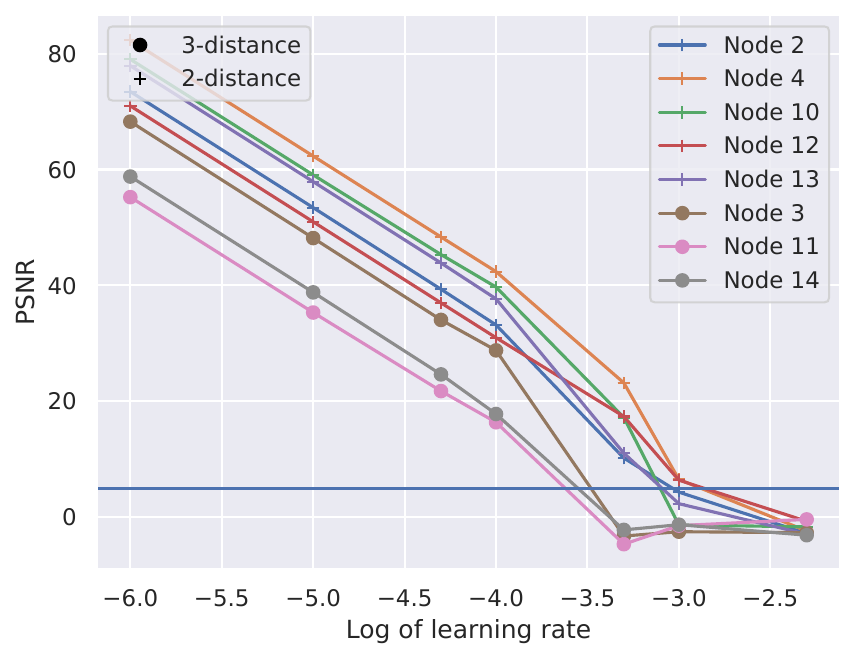}
    \caption{Impact of the learning rate on the reconstruction attack on D-GD for the Florentine graph experiment shown in  \cref{fig:ReconstructionFlorentine}. We plot the PSNR (averaged over 4 runs) between the reconstructed and original images for each of the target nodes and for different learning rates. We use different markers to classify the nodes that are at a distance of $2$ from the attackers and the ones that are at a distance of $3$ (The distance here refers to that of the shortest path between the attacker and the target). Points above the blue horizontal line are reconstructed with good visual quality. }
    \label{fig:learning_rate}
\end{figure}

\section{Discussion of the Assumption of Public Knowledge of the Gossip Matrix}
\label{app:pubW}

In our work, we assume that the attackers know the graph and the gossip matrix. While these quantities may not be fully known in some use-cases, we believe this is a justified assumption for the following reasons:

\begin{enumerate}
    \item In many real-world scenarios, the graph topology is, or can be extracted from, public information. This is the case when nodes are hospitals in various universities (these collaborations are typically public knowledge) or financial institutions (e.g., SEC requires financial ties to be made public), in the context of computations over social network graphs such as Mastodon, and when learning within blockchain networks or distributed ledgers.

\item In general, it appears to be unsafe to assume that the graph and gossip matrix $W$ can be kept fully hidden from the attacker nodes. Since they are part of the learning process, attacker nodes must know at least their corresponding row. From this knowledge, with enough attacker nodes, they may be able to know/infer a large part of the graph. In particular, they can exploit the fact that $W$ must be doubly stochastic. Furthermore, keeping a graph private while publishing basic statistics (such as the spectral gap or the number of edges, triangles, or degree distribution) is also known to be hard. For instance, \citet[][Section II.C therein]{bo2021} gives an example where a node can infer the edges of the graph from its own edges and the spectral gap; yet the spectral gap is often used to determine how many gossip steps are needed to reach a given precision. Therefore, given the challenge of precisely quantifying the risk of graph reconstruction, we find it reasonable to assume both the graph and matrix $W$ to be public information. This stance aligns with the established literature on differential privacy in decentralized learning, where it is commonly assumed that adversaries have access to the graph and matrix $W$ \citep[see e.g.][and references therein]{cyffers2022muffliato}.
\end{enumerate}

\end{document}